\documentclass[11pt]{article}

\usepackage[preprint]{acl}
\usepackage{times}
\usepackage{latexsym}
\usepackage[T1]{fontenc}
\usepackage[utf8]{inputenc}
\usepackage{microtype}
\usepackage{inconsolata}
\usepackage{graphicx}
\usepackage{amsmath,amssymb}
\usepackage{amsthm}
\usepackage{booktabs}
\usepackage{multirow}
\usepackage{tabularx}
\usepackage{url}
\usepackage{algorithm}
\usepackage{algpseudocode}
\newcolumntype{Y}{>{\raggedright\arraybackslash}X}

\newtheorem{theorem}{Theorem}
\newtheorem{proposition}{Proposition}

\title{Beyond Local Accuracy: A Protocol-Level Identifiability Audit for Controlled LLM Reasoning Evaluation}

\author{
  Junhao Luo \quad
  Ning Huang\thanks{These authors contributed equally as second authors.} \quad
  Ziqi Sha\textsuperscript{*} \quad
  Wenxuan Tang\textsuperscript{*} \quad
  Wei Deng\thanks{Corresponding author: \texttt{dengwei@swufe.edu.cn}.} \\
  School of Statistics and Data Science, Southwestern University of Finance and Economics \\
  Chengdu, China
}

\begin{document}
\begin{NoHyper}
\maketitle
\end{NoHyper}

\begin{abstract}
LLM benchmark scores can be precise even when the observation protocol does not identify the behavioral property they are intended to measure. In a controlled, solver-grounded setting, we formalize a protocol-level identifiability audit over a finite behavioral policy class: given policies $\mathcal{H}$, observation support $O$, and estimand $\tau$, we test whether $O$ separates every pair with different $\tau$. The audit requires zero model calls and resolves our diagnostic case: base-only observation collapses seven frozen deterministic policies into one equivalence class; full support yields seven classes and no cross-estimand collisions; every leave-one-out support retains a constructive collision witness. Empirically, both constrained-generation variants have pair-validity 1.0, yet base accuracy and selective-response fidelity diverge---0.620 versus 0.324 across six balanced oracle-transition directions (cluster-bootstrap 95\% CI [0.600, 0.642] vs.\ [0.304, 0.345])---and the gap recurs on a second deterministic source (0.646 vs.\ 0.331). The audit also synthesizes a minimum identifying support $O^*$ for the frozen policy class: two cells instead of the full 36-cell tensor. This case shows how evaluation-design validity can be checked structurally before model inference and why base correctness does not determine intervention-response fidelity.
\end{abstract}

\section{Introduction}

LLM evaluation increasingly relies on accuracy over static benchmarks as a proxy for reasoning quality, despite longstanding concerns that broad capability claims can outrun what benchmark measurements support \citep{raji-etal-2021-everything,salaudeen-etal-2025-measurement}. A model that answers base queries correctly is often described as ``faithful,'' ``sound,'' or ``capable,'' and its failure modes are read off from where accuracy drops. This practice conflates two different quantities. \emph{Static correctness} asks whether the model's answer matches an oracle on fixed inputs. \emph{Intervention-response fidelity} asks whether the model's answer changes in the direction prescribed by a controlled intervention on the input. These are different estimands, and nothing guarantees that a protocol that measures one also measures the other.

This conflation is not merely theoretical. The fragility of reported accuracy under controlled re-instantiation is well documented for standard LLM reasoning benchmarks: chain-of-thought prompting unlocks high accuracy on GSM8K \citep{wei-etal-2022-cot}, yet symbolic re-instantiation with surface-level edits drops accuracy by up to 65\% even though the added clauses do not change the answer \citep{mirzadeh-etal-2024-gsm-symbolic}. Counterfactual task variants likewise reveal systematic degradation when default task assumptions are altered, distinguishing transferable abstract reasoning from narrower task-specific procedures \citep{wu-etal-2024-reasoning}. That work diagnoses counterfactual fragility empirically; we instead ask whether a protocol's observation support point-identifies its claimed estimand on a declared policy class. Recent audits of hundreds of LLM benchmarks find that many operationalize their target constructs through tasks and metrics whose validity is left unjustified \citep{bean2025measuring}, and that the distributional assumptions underlying standard aggregation can change model rankings \citep{siska-etal-2024-examining}. Work on chain-of-thought faithfulness has shown that generated explanations need not reflect the reasoning actually used \citep{turpin-etal-2023-language,lanham-etal-2023-measuring}, while counterfactual prompting studies emphasize that an intervention's effect cannot be attributed to the targeted factor without a semantic-preserving baseline \citep{yang-etal-2026-compared}. What these lines share is a growing recognition that measurement validity depends on what an evaluation protocol is capable of distinguishing. Yet the question is rarely asked in identification terms: \emph{given the observations a protocol collects, is the target estimand point-identified at all?}

We study this distinction in a controlled setting: solver-grounded three-valued reasoning over open-world signed-Horn theories. An independent solver fixes, before any model run, the oracle label for each intervention world. Models answer under a representation contract (TRUE/FALSE/UNKNOWN labels), a readout channel (generated choice or candidate scoring), and a label mapping (one of the six $S_3$ permutations of TRUE/FALSE/UNKNOWN to A/B/C). We ask: \emph{when does the observation support of an evaluation protocol suffice to identify a target behavioral estimand?} This setting serves as a controlled testbed for protocol diagnosis rather than evidence of unrestricted transport to natural-language evaluation.

Our contribution is a protocol-level identifiability audit: before interpreting a reported score, test whether the protocol collects observations that separate the target estimand from behaviorally distinct alternatives. Identification is defined relative to the declared estimand, policy class, and observation support.

\begin{itemize}
\item \textbf{Finite-class protocol identifiability criterion.} We define behavioral policy classes, observation equivalence, and point identification, and prove that a target estimand is point-identified exactly when policies with different estimand values cannot collide on the observed support---relative to the declared policy class and support (Section~\ref{sec:framework}).
\item \textbf{Collision-based synthesis of minimum identifying support.} The same collision structure that detects under-identification synthesizes a minimal identifying support $O^*$: an exact minimum hitting set over cross-estimand distinguishing cells. In the frozen seven-policy case this synthesis yields 26 minimum supports of size two (none mandatory), versus 36 for the full tensor; the numerical minimum is case-specific, while the synthesis procedure is the methodological contribution (Section~\ref{sec:synthetic}).
\item \textbf{Controlled empirical demonstration.} On frozen responses from two instruction-tuned models, we show that local accuracy and intervention-response fidelity diverge under pair-valid constrained generation, across six balanced oracle-transition directions and a deterministic second source (Sections~\ref{sec:design}--\ref{sec:real}).
\end{itemize}

The central claim is structural and protocol-relative: identification holds on the frozen policy class and support, and its measurement consequences hold across readout contracts, transition directions, and two deterministic sources.

\begin{figure*}[t]
\centering
\includegraphics[width=0.96\textwidth]{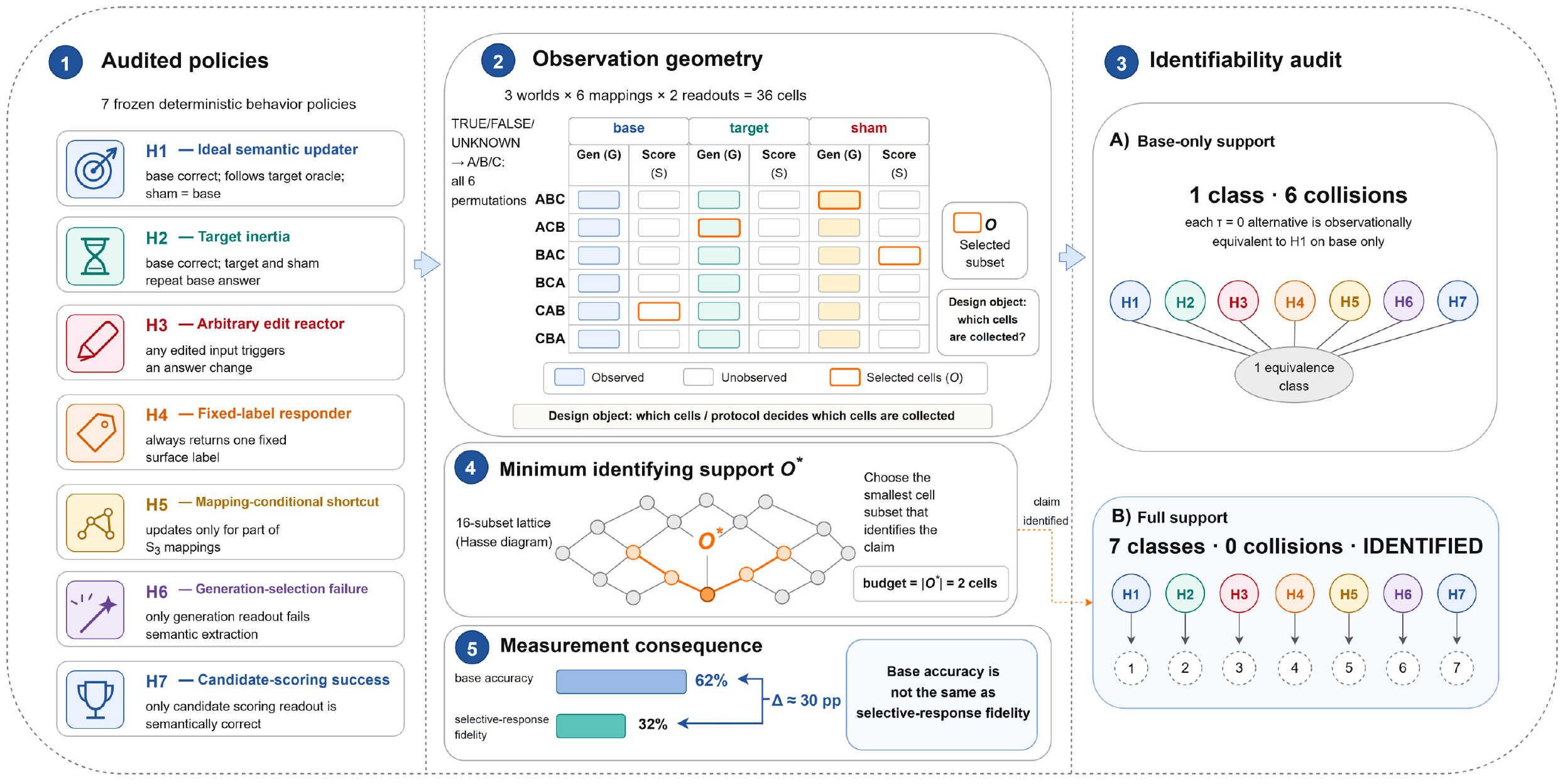}
\caption{Protocol design overview. (1) \emph{Audited policies:} seven frozen deterministic behaviors H1--H7 (ideal semantic updater, target inertia, arbitrary edit reactor, fixed-label responder, mapping-conditional shortcut, generation-selection failure, candidate-scoring success) with their base/target/sham response rules. (2) \emph{Observation geometry:} the world$\times$mapping$\times$readout tensor with $3\times6\times2=36$ cells; orange marks the selected support $O$. (3) \emph{Identifiability audit:} base-only support yields 1 class with 6 cross-estimand collisions (every $\tau{=}0$ alternative is observationally equivalent to the ideal updater on base alone); full support yields 7 classes with 0 collisions (IDENTIFIED). (4) \emph{Minimum identifying support $O^*$:} the smallest cell subset that identifies the claim---just 2 cells on this policy class (budget $=|O^*|$). (5) \emph{Empirical measurement consequence:} base accuracy (62\%) and selective-response fidelity (32\%) are distinct quantities ($\Delta\approx30$\,pp).}
\label{fig:overview}
\end{figure*}

\section{Formal Setup: Protocol-Level Identification}
\label{sec:framework}

We cast evaluation design as an \emph{identification} problem: before asking how accurately a protocol estimates a target behavioral quantity, we ask whether the protocol's observation support is in principle sufficient to identify that quantity. This separates \emph{design validity} (can the support distinguish the estimand's values?) from \emph{estimation accuracy} (how noisy is the estimate on a finite sample?), and it lets us diagnose under-identification structurally---without running any model.

\subsection{Behavioral policies, observation support, and two estimands}
Let $\mathcal{H}$ be a finite class of deterministic behavioral policies. A policy $h\in\mathcal{H}$ maps an input under a representation contract to a response. An \emph{observation support} $O$ is the set of responses a protocol collects. The theoretical target is the \emph{full-support contract-stable selective-response property} $\tau_{\mathrm{full}}:\mathcal{H}\to\{0,1\}$: it equals one only when a policy is base-correct, follows the target oracle, remains stable on matched sham, and does so across both readouts and all six mappings. This binary property is the estimand used by the theorem and synthetic recovery.

The empirical pilot also reports a distinct \emph{local paired selective-response rate}, $\theta_{\mathrm{local}}$, over units $(\text{model},\text{cluster},\pi,r)$. A unit succeeds when its base and target responses are valid and oracle-correct and its sham response is valid and semantically equals its paired base response. Thus $\theta_{\mathrm{local}}$ averages contract-local successes, whereas the full-support empirical criterion requires every mapping--readout contract to succeed within a model--cluster. Neither denominator nor estimand level is interchangeable.

Two policies $h,h'$ are \emph{observationally equivalent} on $O$, written $h\equiv_O h'$, if they produce identical observable responses on $O$. Any estimator based only on $O$ must assign the same value to equivalent policies.

\subsection{Finite-class identification criterion}
\begin{theorem}[Point identification on finite behavior classes]
\label{thm:id}
$\tau$ is point-identified on $O$ if and only if for all $h,h'\in\mathcal{H}$, $\tau(h)\neq\tau(h')$ implies $h\not\equiv_O h'$.
\end{theorem}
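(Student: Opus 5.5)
The plan is to fix a precise definition of point identification and then prove both directions directly. I will say that $\tau$ is \emph{point-identified on $O$} if there is a function $g$ on observation profiles with $\tau(h)=g(h|_O)$ for every $h\in\mathcal{H}$. Here $h|_O$ denotes the tuple of responses $h$ produces on the cells of $O$. By definition, $h\equiv_O h'$ holds exactly when $h|_O=h'|_O$. So $\equiv_O$ is the kernel of the restriction map $\rho:\mathcal{H}\to\mathcal{R}^O$, $h\mapsto h|_O$, and is therefore an equivalence relation. The theorem then becomes the elementary factorization fact: a map $\tau$ factors through $\rho$ if and only if $\tau$ is constant on the fibers of $\rho$.

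For necessity ($\Rightarrow$), I will argue by contraposition. Suppose there are $h,h'$ with $\tau(h)\ne\tau(h')$ but $h\equiv_O h'$. Then $\rho(h)=\rho(h')$, so any candidate $g$ satisfies $g(\rho(h))=g(\rho(h'))$. It therefore cannot equal both $\tau(h)$ and $\tau(h')$. This is exactly the remark after the definition that any estimator based only on $O$ must assign equal values to equivalent policies. The colliding pair is the constructive witness of non-identification that the paper later reports for leave-one-out supports.

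For sufficiency ($\Leftarrow$), I will construct $g$ explicitly. For each profile $y$ in the image $\rho(\mathcal{H})$, pick any $h$ with $\rho(h)=y$ and set $g(y)=\tau(h)$. For profiles outside the image, set $g(y)$ arbitrarily, say $0$. The step that needs care is well-definedness, namely that $g$ does not depend on the representative chosen. If $\rho(h)=\rho(h')=y$, then $h\equiv_O h'$, and the hypothesis in contrapositive form ($h\equiv_O h'\Rightarrow \tau(h)=\tau(h')$) gives the same value. Then $\tau=g\circ\rho$ on $\mathcal{H}$ by construction.

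No step is technically hard. The main obstacle is definitional: the statement must be read relative to the declared finite class $\mathcal{H}$ and deterministic policies. Determinism makes $h|_O$ a single profile rather than a distribution. Finiteness is not actually needed for the equivalence, but it makes the criterion checkable by enumerating the $\binom{|\mathcal{H}|}{2}$ pairs. I will add a remark that $g$ is only pinned down on $\rho(\mathcal{H})$, so identification is protocol- and class-relative. Policies outside $\mathcal{H}$ may collide with members of $\mathcal{H}$ and break identification, which matches the paper's insistence that claims hold only on the declared policy class and support.
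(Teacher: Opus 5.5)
Your proof is correct and matches the paper's argument: for necessity, any estimator that depends only on $O$ must agree on a colliding pair, and for sufficiency, the estimator is defined through a class representative, with well-definedness following from the hypothesis. Casting this as the factorization of $\tau$ through the restriction map $h\mapsto h|_O$ only makes the same idea more explicit.
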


\begin{proof}
($\Rightarrow$) If some $h,h'$ with $\tau(h)\neq\tau(h')$ satisfy $h\equiv_O h'$, then any estimator $\hat\tau$ measurable with respect to $O$ assigns $\hat\tau(h)=\hat\tau(h')$, so at least one is wrong. ($\Leftarrow$) If all pairs with different $\tau$ are inequivalent, then $\tau$ is constant on each equivalence class, so the estimator $\hat\tau(h)=\tau(h^\ast)$ for any $h^\ast\equiv_O h$ is well-defined and correct.
\end{proof}

The theorem is deliberately finite and non-statistical: it states a structural condition on the support and the policy class, independent of sample size. Statistical uncertainty (Section~\ref{sec:real}) sits on top of identification, not in place of it---a protocol that is not point-identified cannot be rescued by more data, only by a larger support.

\subsection{Interventional response tensor}
We organize observations as an \emph{Interventional Response Tensor} $R_{w,\pi,r}$. Its indices are world $w$, label permutation $\pi$, and readout $r$. Worlds are base, target, and sham; $\pi\in S_3$; readouts are generated choice and candidate scoring. Each cell stores the raw response, validity, decoded semantic response, and solver-grounded oracle. The tensor has three indices, while the protocol requires target and matched-sham observations, paired readouts, and all six mappings. These are four diagnostically distinct support components, not four orthogonal tensor axes. Full support has $3\times6\times2=36$ cells per model--cluster.

\subsection{Corollaries: axis insufficiency}
Theorem~\ref{thm:id} yields four corollaries, each obtained by exhibiting a pair of policies with different $\tau$ that are equivalent when a support component is removed. We state them informally; Section~\ref{sec:synthetic} constructs the witnesses.
\begin{description}
\item[Base insufficiency.] $O=\{\text{base}\}$ cannot separate a selective updater from target inertia (both answer base correctly).
\item[Sham insufficiency.] Dropping matched sham equates a selective updater with an any-edit reactor (both update on target; only sham distinguishes them).
\item[Readout insufficiency.] Dropping paired readout equates a semantic failure with a contract-specific reportability failure (a model that fails on one readout is indistinguishable from one that fails semantically on both).
\item[Mapping insufficiency.] Dropping full $S_3$ equates a semantic updater with an unseen-mapping label shortcut (a model correct on the observed mapping is indistinguishable from one that hard-codes labels).
\end{description}
Each corollary is a constructive collision, not a heuristic: the synthetic witnesses of Section~\ref{sec:synthetic} realize each pair explicitly.

\section{Protocol Support and Component Essentiality}
\label{sec:design}

\subsection{Solver-grounded three-valued semantics}
The pilot uses open-world signed-Horn theories under three-valued (TRUE/FALSE/UNKNOWN) semantics. An independent solver fixes, before any model run, the oracle label for each world by checking whether the query is entailed, refuted, or undetermined by the theory. This solver grounding is essential: it removes the experimenter's judgment from the oracle, so the target oracle is a property of the theory plus the intervention, not of a model annotation.

\subsection{World construction}
Three worlds are constructed per cluster. The \emph{base} world is the original input. The \emph{target} world applies an edit that changes the solver oracle, while the \emph{sham} world applies an edit that preserves it. Within the frozen policy class, observing both roles separates oracle-directed response from an any-edit reactor. This is a protocol-discrimination role, not clean causal isolation: target uses support replacement whereas sham adds a query-disjoint fact, so edit type perfectly distinguishes the roles and length, edit-distance, and lexical cues also differ. Formal oracle validation has zero mismatches over 144 worlds, and the automatic surface audit flags material surface confounding. For the human semantic audit, two reviewers independently annotated all target and sham items from 48 balanced clusters (96 items total). Each reviewer received an independently randomized order with role, expected label, and model output hidden, and assigned TRUE, FALSE, UNKNOWN, or UNSURE. They agreed on 96/96 items (Cohen's $\kappa=1.0$), so no adjudication was required. This audit checks semantic labels, not causal comparability. Consequently, target/sham contrasts identify behavior under these protocol conditions, not a pure semantic-edit effect.

\subsection{Four support components}
The frozen audit varies four diagnostically distinct support components:
\begin{itemize}
\item \textbf{target} separates selective updating from retaining a base response after the oracle changes;
\item \textbf{matched sham} separates oracle-directed updating from responding to any input edit;
\item \textbf{paired readout} reveals whether validity and diagnosis depend on the response channel;
\item \textbf{full $S_3$ mapping support} reveals dependence on a subset of surface-label contracts.
\end{itemize}
These are protocol requirements over three tensor indices: target and sham are distinct world roles within $w$, while readout and mapping correspond to $r$ and $\pi$. The labels describe behavioral distinctions in observed responses and do not assert internal causes.

\subsection{Axis-essentiality proposition}
\begin{proposition}
\label{prop:axis}
If policies $h,h'$ exist with $\tau(h)\neq\tau(h')$ that are separable under the full support but collide when a support component $a$ is removed, then component $a$ is essential for the policy class and estimand.
\end{proposition}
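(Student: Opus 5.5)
The plan is to reduce the proposition to Theorem~\ref{thm:id}, after first fixing what ``essential'' means. I will read ``component $a$ is essential for $(\mathcal{H},\tau)$'' as follows: $\tau$ is point-identified on the full support $O_{\mathrm{full}}$, but it is not point-identified on the reduced support $O_{\mathrm{full}}\setminus O_a$. Here $O_a$ is the set of tensor cells carried by component $a$. For target or sham this means the cells with that world role. For paired readout it means the cells of one readout. For full $S_3$ it means the cells outside the observed mapping subset. With this reading, essentiality says that removing $a$ destroys identification. I will prove exactly the second clause from the hypothesis. The first clause, identification on the full support, is a standing premise of the audit: it is the seven-class, zero-collision condition. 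I will state it explicitly rather than derive it from the single separable pair.

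The key step is a direct application of the ($\Rightarrow$) direction of Theorem~\ref{thm:id}, taken in contrapositive form. By hypothesis there are $h,h'\in\mathcal{H}$ with $\tau(h)\neq\tau(h')$ and $h\equiv_{O_{\mathrm{full}}\setminus O_a} h'$. This is precisely a violation of the criterion of Theorem~\ref{thm:id} on the reduced support. So $\tau$ is not point-identified there. Concretely, any estimator measurable with respect to the reduced support assigns $h$ and $h'$ the same value, so it errs on at least one of them. The hypothesis that $h\not\equiv_{O_{\mathrm{full}}} h'$ is not needed for non-identification. Its role is to certify that the collision is caused by deleting $O_a$ rather than being a collision already present in the class. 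Equivalently, some cell of $O_a$ distinguishes $h$ from $h'$. This holds because observational equivalence on a support is the conjunction of agreement on each cell, so agreement on the reduced support together with disagreement on the full support forces disagreement somewhere in $O_a$.

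To finish, I will use the fact that point identification is monotone in the support, which follows from the same cellwise-conjunction fact. If $O'\subseteq O_{\mathrm{full}}\setminus O_a$, then $h\equiv_{O_{\mathrm{full}}\setminus O_a}h'$ implies $h\equiv_{O'}h'$. Hence no support that omits all of $O_a$ identifies $\tau$. This gives the stronger form of essentiality: every identifying support must intersect $O_a$. That form connects to the hitting-set synthesis of $O^*$ in Section~\ref{sec:synthetic}.

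The main obstacle is definitional rather than technical. The statement leaves ``removing a component'' and ``essential'' informal, and components are not orthogonal tensor axes. For example, target and sham share the world index $w$. So most of the care goes into fixing $O_a$ as a cell subset and stating essentiality relative to $(\mathcal{H},\tau)$. Once those definitions are fixed, each step is a one-line consequence of Theorem~\ref{thm:id} and the cellwise definition of $\equiv_O$.
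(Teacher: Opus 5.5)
Your proposal is correct and matches the paper's implicit argument. The paper gives no separate proof; it treats the proposition as the contrapositive of the ($\Rightarrow$) direction of Theorem~\ref{thm:id} applied to each leave-one-out support, instantiated by the witnesses in Table~\ref{tab:loo} and Appendix~\ref{app:witnesses}. Two of your additions are useful clarifications the paper leaves unstated: that full-support identification is a separate standing premise not implied by a single separable pair, and the monotonicity strengthening that every identifying support must intersect $O_a$, which ties directly to the hitting-set synthesis of $O^*$.
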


The complete subset lattice (Section~\ref{sec:synthetic}) supplies constructive evidence for all four components: each leave-one-out support retains a collision between the ideal updater ($\tau_{\mathrm{full}}=1$) and at least one $\tau_{\mathrm{full}}=0$ alternative. Removing any component therefore destroys point identification on this policy class.

\section{Synthetic Policy Recovery}
\label{sec:synthetic}

This section performs \emph{zero model calls}: it validates that the measurement design and the analyzer can recover behavioral differences that are known by construction, separating design validity from any property of a particular LLM. Concretely, we ask whether the observation support $O$ suffices to point-identify the target estimand $\tau$ (selective-response fidelity) on a finite, frozen policy class $\mathcal{H}$, and we answer by exhaustive enumeration rather than by estimation.

\subsection{Frozen policy class}
We implement seven deterministic policies plus one stochastic mixture. The reference \emph{ideal semantic updater} has $\tau_{\mathrm{full}}{=}1$; the other six deterministic policies have $\tau_{\mathrm{full}}{=}0$. They cover inertia and indiscriminate reaction (\emph{target inertia}, \emph{any-edit reactor}); label-based shortcuts (\emph{fixed-label responder}, \emph{mapping-conditional shortcut}); and readout-specific failures (\emph{generated-only failure}, \emph{scoring-only success}). The separate \emph{stochastic mixture} mixes ideal and inertia behavior at a declared rate and is analyzed only at its stated granularity.

\subsection{Complete subset lattice and one addition path}
We enumerate the complete $2^4=16$ subset lattice over target, sham, paired readout, and full $S_3$ mapping support. For readability, Table~\ref{tab:synthetic} and Figure~\ref{fig:synthetic} show one fixed cumulative addition path, $O_0$ through $O_4$; the path is illustrative rather than a unique ordering. The leave-one-out results in Table~\ref{tab:loo} establish component necessity without relying on that ordering, and Appendix~\ref{app:synthetic-full} reports all 16 subsets.

\begin{table}[t]
\centering
\footnotesize
\setlength{\tabcolsep}{4pt}
\begin{tabular}{llrrll}
\toprule
Support & Axis added & $|O|$ & Classes & Coll. & Identified \\
\midrule
$O_0$ & base          &  1 & 1 & 6 & no \\
$O_1$ & + target      &  2 & 2 & 3 & no \\
$O_2$ & + sham        &  3 & 3 & 2 & no \\
$O_3$ & + readout     &  6 & 5 & 1 & no \\
$O_4$ & + full $S_3$  & 36 & 7 & 0 & \textbf{yes} \\
\bottomrule
\end{tabular}
\caption{One fixed addition path through the 16-subset lattice. $|O|$ is projection size, ``Classes'' counts observation-equivalence classes among seven deterministic policies, and ``Coll.'' counts cross-estimand collision pairs. Only full support $O_4$ point-identifies $\tau_{\mathrm{full}}$ on the frozen policy class.}
\label{tab:synthetic}
\end{table}

\begin{figure*}[t]
\centering
\includegraphics[width=\textwidth]{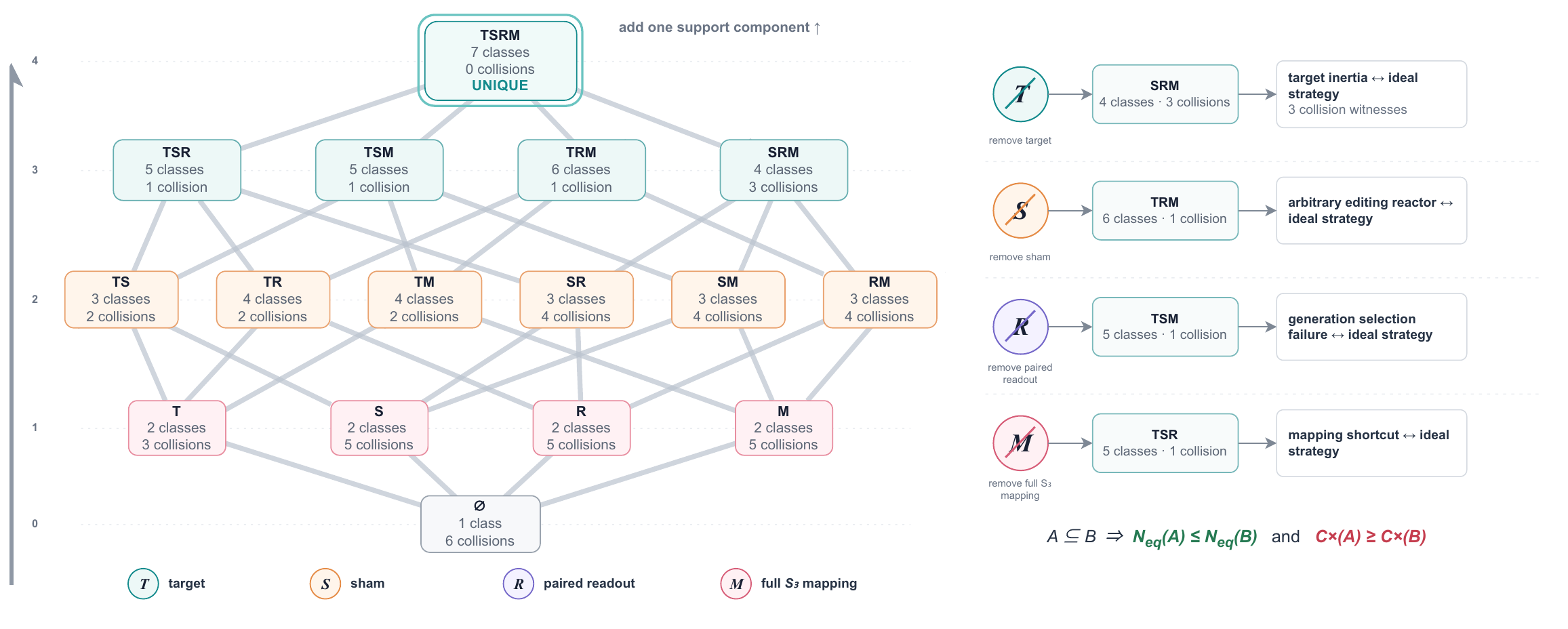}
\caption{Support-component essentiality: the full power-set lattice over target ($T$), sham ($S$), paired readout ($R$), and full-$S_3$ mapping ($M$). Each node reports its observation-equivalence class count and cross-estimand collision count over the seven frozen policies. Only full support $TSRM$ is identified (7 classes, 0 collisions; \textsc{unique}). The four rank-3 nodes are the leave-one-out witnesses: removing any component reintroduces at least one collision (e.g., removing target leaves $SRM$ with 3 collisions between target inertia and the ideal updater); removing more components only adds collisions and merges classes.}
\label{fig:synthetic}
\end{figure*}

\subsection{Recovery results}
Under $O_0$ (base only), all seven deterministic policies project to a \emph{single} observation-equivalence class, and the estimand is not point-identified: six collision pairs link the ideal updater to every $\tau{=}0$ alternative. Each added axis splits classes: $O_1$ yields 2 classes (collapses 3 collisions), $O_2$ yields 3 (2 remain), $O_3$ yields 5 (1 remains, between \texttt{ideal\_semantic\_updater} and \texttt{mapping\_conditional\_shortcut}), and $O_4$ yields 7 singleton classes with \emph{no} estimand collisions ($\text{point\_identified}{=}1$). The trajectory is monotone: removing any axis from $O_4$ reintroduces at least one collision.

\subsection{Component essentiality via leave-one-out witnesses}
For each component $a$, we remove only $a$ from full support. Every leave-one-out support retains a cross-estimand collision (Table~\ref{tab:loo}), while full support has none. Target removal yields three collisions involving the ideal policy; each other removal yields one exclusive witness. Thus all four components are necessary for point identification of $\tau_{\mathrm{full}}$ on this policy class.

\begin{table}[t]
\centering
\footnotesize
\setlength{\tabcolsep}{3pt}
\begin{tabular}{lrrl}
\toprule
Removed & $|O|$ & Coll. & One witness against ideal \\
\midrule
Target & 24 & 3 & target inertia \\
Matched sham & 24 & 1 & any-edit reactor \\
Paired readout & 18 & 1 & generated-only failure \\
Full $S_3$ & 6 & 1 & mapping shortcut \\
\bottomrule
\end{tabular}
\caption{Leave-one-out audit from the canonical 16-subset lattice. Every reduced support has at least one cross-estimand collision; full support has projection size 36 and zero collisions.}
\label{tab:loo}
\end{table}

\subsection{Synthesis of minimum identifying support \texorpdfstring{$O^*$}{O*}}
The same collision structure that certifies under-identification also enables \emph{protocol synthesis}: instead of auditing a fixed support, find the smallest support that still separates every cross-estimand pair. For each pair $h,h'$ with $\tau(h)\neq\tau(h')$, let $D_{h,h'}=\{o\in O_{\mathrm{full}}: h(o)\neq h'(o)\}$ be its distinguishing cells. A support $S\subseteq O_{\mathrm{full}}$ point-identifies $\tau$ iff $S$ is a \emph{hitting set} of $\{D_{h,h'}\}$, i.e.\ $S\cap D_{h,h'}\neq\emptyset$ for every cross-estimand pair. We enumerate all minimum-size hitting sets exactly.

The synthesis procedure is the methodological contribution; its numerical outcome is case-specific. On the frozen seven-policy class it yields a striking result: $\lvert O^*\rvert=2$ cells---not 36. There are 26 distinct 2-cell supports (no cell is mandatory): every minimum pairs one \emph{target}-generated-choice cell with one \emph{sham}-candidate-scoring cell (target mapping in $\{1,3,5\}$ paired with any sham mapping, or $\{2,4\}$ paired with sham mapping $\geq2$; Appendix~\ref{app:ostar}). Removing either cell from $O^*$ reintroduces a collision. Under equal cell costs the minimal cost is 2; if candidate scoring costs three times a generated choice, every 2-cell support costs 4 (one scoring cell at 3 plus one generated cell at 1). Thus the audit does not demand the full tensor: identification has a precise price, and the framework both detects under-identification and synthesizes the cheapest support that removes it. The cost saving is not purely formal: scoring the same real responses under the 2-cell support $O^*$ still reproduces the accuracy--selective dissociation (gaps of $\approx$0.71 and $\approx$0.52 across the two models), while the strict 36-cell criterion is satisfied by 0/48 model--clusters, matching the full-support audit.

\subsection{Granularity bound on mixture recovery}
The stochastic mixture is recovered only at its declared granularity: the analyzer identifies the equivalence class containing the mixture (separating it from every deterministic alternative under $O_4$) but does not identify the exact mixture parameters from behavioral observation alone. This is a genuine identifiability bound, not an analyzer limitation: two mixtures with the same support-projected behavior are observationally equivalent by construction, so no estimator based on $O$ can separate them.

\section{Empirical Consequences on Frozen Model Responses}
\label{sec:real}

\subsection{Diagnostic case: original 24-cluster pilot}
The diagnostic case spans 24 clusters, six mappings, two readouts, three worlds, and two models: \texttt{Qwen/Qwen2.5-7B-Instruct} \citep{qwen2025technical} and \texttt{meta-llama/Llama-3.1-8B-Instruct} \citep{grattafiori2024llama3}. This gives 1,728 cells and 576 paired base/target/sham units. Target flips the oracle to UNKNOWN; matched sham preserves it. Generated choice parses free-form A/B/C answers, whereas candidate scoring uses conditional sequence log-probability. All intervals use 10,000 whole-cluster bootstrap replicates; contract cells are repeated measurements, not independent units.

\subsection{Two empirical estimands}
We distinguish semantic response fidelity from end-to-end contract fidelity. The former evaluates decoded semantic behavior conditional on a valid response; the latter additionally treats failures to produce a valid response under the required readout contract as failures. The paired-readout audit is therefore primarily an audit of end-to-end protocol dependence.
The frozen local estimand $\theta_{\mathrm{local}}$ is the paired selective-response rate over all base--target pairs: the proportion of 576 paired units where base and target are valid and oracle-correct and sham is valid and semantically equals its paired base response. Whole-cluster bootstrap resamples the 24 clusters and preserves all repeated contracts within each cluster. Separately, the empirical full-support criterion is evaluated on 48 model--clusters and requires the selective-response contract to hold across all 36 world--mapping--readout cells. This second criterion is the empirical counterpart of $\tau_{\mathrm{full}}$; it is not an ancillary sensitivity analysis of $\theta_{\mathrm{local}}$.

\begin{table}[t]
\centering
\scriptsize
\setlength{\tabcolsep}{2pt}
\begin{tabularx}{\columnwidth}{Ycc}
\toprule
Metric & Estimate [95\% CI] & $N$ \\
\midrule
Base accuracy & 0.403 --- & 576 \\
\textbf{Local response $\mid$ base-correct} & \textbf{0.138 [0.068, 0.218]} & 232 \\
Target-follow $\mid$ base-correct & 0.168 --- & 232 \\
Sham-stay $\mid$ base-correct & 0.918 --- & 232 \\
Local paired response & 0.056 [0.026, 0.090] & 576 \\
Full-support response & 0.000 --- & 48 \\
\bottomrule
\end{tabularx}
\caption{Estimand-separated results. The bold conditional headline is selective response among base-correct units (cluster bootstrap); target-follow and sham-stay decompose it. Local paired response and the full-support model--cluster criterion are distinct estimands.}
\label{tab:main}
\end{table}

\subsection{Conditional fidelity is the headline}
Because selective response requires base correctness, target following, and sham stability, the headline is conditional: only 0.138 of base-correct units satisfy it (32/232; cluster-bootstrap 95\% CI [0.068, 0.218]). Target following is 0.168, versus 0.918 sham stability, locating the shortfall mainly in target response. Separately, 0/48 model--clusters satisfy the full-support criterion. These rates do not become the binary synthetic property; they show why selective response must be measured rather than inferred from local correctness.

\subsection{Behavioral composition}
The five-class rule yields 317 inertia diagnoses and 183 invalid units, but does not require base correctness. A mutually exclusive audit instead finds 183 reportability failures, 179 base-knowledge failures, and 164 target-inertia cases; target inertia is therefore the largest \emph{base-correct semantic non-response}, not the dominant category overall. Sham stability holds for 349/576 units.

\subsection{Contract and transition robustness}
\label{sec:gate3}
We report three bounded robustness checks. Figure~\ref{fig:gate3} shows the balanced-transition analysis; Appendix~\ref{app:gate3} reports equal-validity contracts and bounded source transport. \textbf{Equal-validity contracts} give both constrained-generation variants pair-validity $1.0$; pooled candidate scoring has pair-validity $0.9826$. Every model--readout stratum nevertheless retains a positive accuracy--response gap. \textbf{Balanced transitions} cover six directions across 48 clusters: equal-weighted selective response is $0.324$ (95\% CI $[0.304,0.345]$), versus base accuracy $0.620$ ($[0.600,0.642]$), with $P(\Delta>0)=1.0$ over synchronized cluster bootstraps. Conditional response remains $0.244$--$0.259$ across the two mapped readouts. \textbf{Source transport} is limited to one deterministic second source (24 clusters), with rates $0.331$ ($[0.312,0.350]$) and $0.646$ ($[0.618,0.674]$).

A second solver-generated deterministic source uses an independent synthetic vocabulary and covers all six non-identity TRUE/FALSE/UNKNOWN transition directions, with four clusters per direction (24 total). We generated the full set, applied a deterministic shuffle with seed 20260806, and performed no result-based filtering. Intervals use 10,000 synchronized whole-cluster bootstrap replicates (seed 20260805); balanced pooling weights directions equally and resamples clusters within direction. Contract cells remain repeated measurements. These checks do not establish population transport or a pure target-edit effect.

\begin{figure*}[t]
\centering
\includegraphics[width=0.92\textwidth]{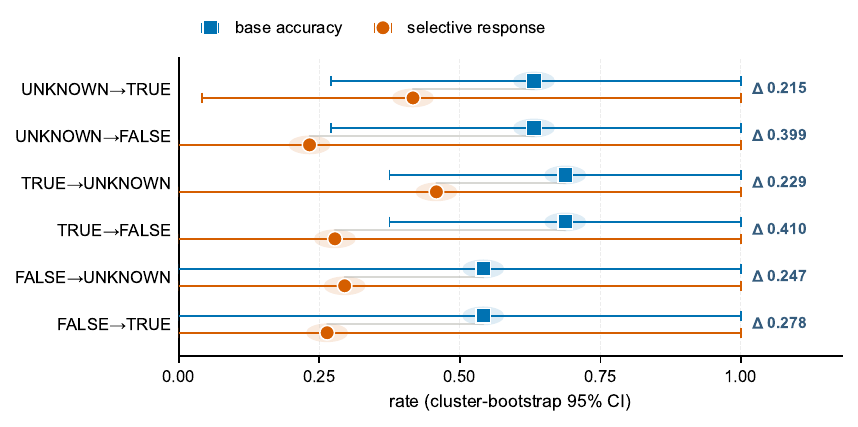}
\caption{Accuracy--response gaps across six balanced oracle-transition directions with synchronized cluster-bootstrap intervals. Squares show base accuracy; circles show selective response. These are measurement consequences, not pure edit effects or population transport.}
\label{fig:gate3}
\end{figure*}

\section{Behavioral Diagnoses Across Protocol Components}
\label{sec:taxonomy}

We localize observed contract failures without inferring internal causes; all decompositions are exploratory and post-result. Table~\ref{tab:composition} stratifies by model and readout, and Appendix Figures~\ref{fig:mapping}--\ref{fig:readout} audit mapping dependence and readout validity.

\noindent\textbf{Target and sham.} Target follow is $0.0677$; the exclusive audit finds 164 target-inertia and 179 base-knowledge failures. Sham stability is $0.6059$ (349/576); of the remaining 227 units, 183 are invalid and 44 valid but non-stable. Thus target and sham answer different diagnostic questions.

\noindent\textbf{Mapping and readout.} Semantic equivariance is $0.0625$ (18/288 groups), rising to $0.1023$ after validity filtering; fixed-label attachment is $0.0590$. Generated-choice validity is $0.4560$, versus $0.9942$ for scoring. When both readouts are valid, semantic answers always agree (base 132/132, target 125/125, sham 137/137). Thus the $115/288$ five-class agreement in Figure~\ref{fig:readout} reflects validity and diagnosis availability, not demonstrated semantic disagreement: paired readout matters mainly to end-to-end contract fidelity.

\noindent\textbf{Contract and transition.} Rates vary by model and readout, but base accuracy and local response induce the same four-contract ordering; we claim no ranking reversal. FALSE$\to$UNKNOWN and TRUE$\to$UNKNOWN rates are $0.0224$ and $0.0947$, an exploratory asymmetry.

\section{Related Work}
\label{sec:related}
We focus on one component of evaluation validity: identifiability relative to a declared behavioral class and support, which our framework makes mechanically auditable. Measurement work separates constructs from operationalizations and questions benchmark validity \citep{jacobs-wallach-2021-measurement,wallach-etal-2025-position,bowman-dahl-2021-will,bean2025measuring,siska-etal-2024-examining}. Behavioral tests and contrast sets show that accuracy can hide intervention failures \citep{ribeiro-etal-2020-beyond,gardner-etal-2020-evaluating}; counterfactual prompting motivates semantic-preserving baselines \citep{yang-etal-2026-compared}. Chain-of-thought work studies whether explanations reflect process \citep{turpin-etal-2023-language,lanham-etal-2023-measuring,yee-etal-2024-dissociation,paul-etal-2024-making,tutek-etal-2025-measuring,zaman-srivastava-2025-causal}. We instead use constructive collisions to audit point identification of black-box response fidelity, without inferring mechanisms.

\section{Discussion}
\label{sec:discussion}

Identification depends jointly on estimand, policy class, and support. On the frozen class, base-only observation collapses all policies; full support separates them; and each leave-one-out support retains a cross-estimand collision. The 5.56\% local rate and \textbf{0/48} full-support result answer different questions, preventing local success from implying contract stability.

For protocol design, declare the estimand, enumerate plausible collisions, and add separating measurements. For $\tau_{\mathrm{full}}$ on this class, all four support components have leave-one-out witnesses. Collision synthesis then converts measurement budget into a finite optimization problem: under the stated cell-cost model, a minimum separating support has two cells. This pre-inference audit complements broader construct-validity arguments.

\section{Conclusion}
For the frozen class, the audit exposes base-only collisions, verifies full-support identification, and synthesizes minimum separating support. Empirically, accuracy--response gaps persist under pair-valid generation, six balanced transitions, and one deterministic second source. Evaluation should state its behavioral estimand and test whether support identifies it before interpreting a score. The present study establishes this workflow in a controlled solver-grounded reasoning setting.

\section*{Limitations}
\label{sec:limitations}

The structural results are relative to a frozen seven-policy class, not an exhaustive model of natural-language behavior; additional policies may require more support. Solver-grounded signed-Horn data provide exact oracles and enable exhaustive audits; external validity beyond this controlled setting remains to be established. Evidence covers two instruction-tuned models, 24 diagnostic clusters, 48 balanced clusters, and one bounded synthetic source change, with cluster-level uncertainty; it does not establish transport to natural tasks or model populations. Target and sham differ in edit type and surface form, so their contrast supports protocol discrimination, not a pure edit effect. Local and full-support criteria use different units and cannot be pooled. Post-result behavioral diagnoses do not identify mechanisms.

\section*{Ethical Considerations}
\label{sec:ethics}

Explicit estimands and identification checks reduce over-interpretation of behavioral scores as evidence of ability or mechanism. Qwen2.5 and Llama 3.1 are used under their public licenses; inference is local and no weights are redistributed.

\section*{Data and Code Availability}
Code, data, and frozen evaluation artifacts will be released upon acceptance.

\bibliography{custom}

\begin{thebibliography}{21}
\providecommand{\natexlab}[1]{#1}

\bibitem[{Bean et~al.(2025)Bean, Kearns, Romanou, Hafner, Mayne, Batzner,
  Foroutan~Eghlidi, Schmitz, Korgul, Batra, Deb, Beharry, Emde, Foster, Gausen,
  Grandury, Han, Hofmann, Ibrahim, Kim, Kirk, Lin, Liu, Luettgau, Magomere,
  Rystr\o~m, Sotnikova, Yang, Zhao, Bibi, Bosselut, Clark, Cohan, Foerster,
  Gal, Hale, Raji, Summerfield, Torr, Ududec, Rocher, and
  Mahdi}]{bean2025measuring}
Andrew~M. Bean, Ryan~Othniel Kearns, Angelika Romanou, Franziska~Sofia Hafner,
  Harry Mayne, Jan Batzner, Negar Foroutan~Eghlidi, Chris Schmitz, Karolina
  Korgul, Hunar Batra, Oishi Deb, Emma Beharry, Cornelius Emde, Thomas Foster,
  Anna Gausen, Mar\'{\i}a Grandury, Sophia Han, Valentin Hofmann, Lujain
  Ibrahim, and 23 others. 2025.
\newblock \href {https://doi.org/10.52202/085713-0590} {Measuring what matters:
  Construct validity in large language model benchmarks}.
\newblock In \emph{Advances in Neural Information Processing Systems},
  volume~38. Curran Associates, Inc.

\bibitem[{Bowman and Dahl(2021)}]{bowman-dahl-2021-will}
Samuel~R. Bowman and George Dahl. 2021.
\newblock \href {https://doi.org/10.18653/v1/2021.naacl-main.385} {What will it
  take to fix benchmarking in natural language understanding?}
\newblock In \emph{Proceedings of the 2021 Conference of the North American
  Chapter of the Association for Computational Linguistics: Human Language
  Technologies}, pages 4843--4855, Online. Association for Computational
  Linguistics.

\bibitem[{Gardner et~al.(2020)Gardner, Artzi, Basmov, Berant, Bogin, Chen,
  Dasigi, Dua, Elazar, Gottumukkala, Gupta, Hajishirzi, Ilharco, Khashabi, Lin,
  Liu, Liu, Mulcaire, Ning, Singh, Smith, Subramanian, Tsarfaty, Wallace,
  Zhang, and Zhou}]{gardner-etal-2020-evaluating}
Matt Gardner, Yoav Artzi, Victoria Basmov, Jonathan Berant, Ben Bogin, Sihao
  Chen, Pradeep Dasigi, Dheeru Dua, Yanai Elazar, Ananth Gottumukkala, Nitish
  Gupta, Hannaneh Hajishirzi, Gabriel Ilharco, Daniel Khashabi, Kevin Lin,
  Jiangming Liu, Nelson~F. Liu, Phoebe Mulcaire, Qiang Ning, and 7 others.
  2020.
\newblock \href {https://doi.org/10.18653/v1/2020.findings-emnlp.117}
  {Evaluating models' local decision boundaries via contrast sets}.
\newblock In \emph{Findings of the Association for Computational Linguistics:
  EMNLP 2020}, pages 1307--1323, Online. Association for Computational
  Linguistics.

\bibitem[{Jacobs and Wallach(2021)}]{jacobs-wallach-2021-measurement}
Abigail~Z. Jacobs and Hanna Wallach. 2021.
\newblock \href {https://doi.org/10.1145/3442188.3445901} {Measurement and
  fairness}.
\newblock In \emph{Proceedings of the 2021 ACM Conference on Fairness,
  Accountability, and Transparency}, pages 375--385, New York, NY, USA.
  Association for Computing Machinery.

\bibitem[{Lanham et~al.(2023)Lanham, Chen, Radhakrishnan, Steiner, Denison,
  Hernandez, Li, Durmus, Hubinger, Kernion, Luko{\v{s}}i{\=u}t{\.e}, Nguyen,
  Cheng, Joseph, Schiefer, Rausch, Larson, McCandlish, Kundu, Kadavath, Yang,
  Henighan, Maxwell, Telleen-Lawton, Hume, Hatfield-Dodds, Kaplan, Brauner,
  Bowman, and Perez}]{lanham-etal-2023-measuring}
Tamera Lanham, Anna Chen, Ansh Radhakrishnan, Benoit Steiner, Carson Denison,
  Danny Hernandez, Dustin Li, Esin Durmus, Evan Hubinger, Jackson Kernion,
  Kamil{\.e} Luko{\v{s}}i{\=u}t{\.e}, Karina Nguyen, Newton Cheng, Nicholas
  Joseph, Nicholas Schiefer, Oliver Rausch, Robin Larson, Sam McCandlish,
  Sandipan Kundu, and 11 others. 2023.
\newblock \href {https://arxiv.org/abs/2307.13702} {Measuring faithfulness in
  chain-of-thought reasoning}.
\newblock \emph{arXiv preprint arXiv:2307.13702}.

\bibitem[{{Llama Team, AI @ Meta}(2024)}]{grattafiori2024llama3}
{Llama Team, AI @ Meta}. 2024.
\newblock \href {https://arxiv.org/abs/2407.21783} {The llama 3 herd of
  models}.
\newblock \emph{arXiv preprint arXiv:2407.21783}.

\bibitem[{Mirzadeh et~al.(2025)Mirzadeh, Alizadeh, Shahrokhi, Tuzel, Bengio,
  and Farajtabar}]{mirzadeh-etal-2024-gsm-symbolic}
Iman Mirzadeh, Keivan Alizadeh, Hooman Shahrokhi, Oncel Tuzel, Samy Bengio, and
  Mehrdad Farajtabar. 2025.
\newblock \href {https://arxiv.org/abs/2410.05229} {Gsm-symbolic: Understanding
  the limitations of mathematical reasoning in large language models}.
\newblock In \emph{International Conference on Learning Representations
  (ICLR)}.

\bibitem[{Paul et~al.(2024)Paul, West, Bosselut, and
  Faltings}]{paul-etal-2024-making}
Debjit Paul, Robert West, Antoine Bosselut, and Boi Faltings. 2024.
\newblock \href {https://doi.org/10.18653/v1/2024.findings-emnlp.882} {Making
  reasoning matter: Measuring and improving faithfulness of chain-of-thought
  reasoning}.
\newblock In \emph{Findings of the Association for Computational Linguistics:
  EMNLP 2024}, pages 15012--15032, Miami, Florida, USA. Association for
  Computational Linguistics.

\bibitem[{{Qwen Team}(2024)}]{qwen2025technical}
{Qwen Team}. 2024.
\newblock \href {https://arxiv.org/abs/2412.15115} {Qwen2.5 technical report}.
\newblock \emph{arXiv preprint arXiv:2412.15115}.

\bibitem[{Raji et~al.(2021)Raji, Denton, Bender, Hanna, and
  Paullada}]{raji-etal-2021-everything}
Inioluwa~Deborah Raji, Emily Denton, Emily~M. Bender, Alex Hanna, and
  Amandalynne Paullada. 2021.
\newblock \href {https://openreview.net/forum?id=zhahjq12BH} {{AI} and the
  everything in the whole wide world benchmark}.
\newblock In \emph{Advances in Neural Information Processing Systems: Datasets
  and Benchmarks Track}, volume~34. Curran Associates, Inc.

\bibitem[{Ribeiro et~al.(2020)Ribeiro, Wu, Guestrin, and
  Singh}]{ribeiro-etal-2020-beyond}
Marco~Tulio Ribeiro, Tongshuang Wu, Carlos Guestrin, and Sameer Singh. 2020.
\newblock \href {https://doi.org/10.18653/v1/2020.acl-main.442} {Beyond
  accuracy: Behavioral testing of {NLP} models with {C}heck{L}ist}.
\newblock In \emph{Proceedings of the 58th Annual Meeting of the Association
  for Computational Linguistics}, pages 4902--4912, Online. Association for
  Computational Linguistics.

\bibitem[{Salaudeen et~al.(2025)Salaudeen, Reuel, Ahmed, Bedi, Robertson,
  Sundar, Domingue, Wang, and Koyejo}]{salaudeen-etal-2025-measurement}
Olawale Salaudeen, Anka Reuel, Ahmed Ahmed, Suhana Bedi, Zachary Robertson,
  Sudharsan Sundar, Ben Domingue, Angelina Wang, and Sanmi Koyejo. 2025.
\newblock \href {https://arxiv.org/abs/2505.10573} {Measurement to meaning: A
  validity-centered framework for {AI} evaluation}.
\newblock \emph{arXiv preprint arXiv:2505.10573}.

\bibitem[{Siska et~al.(2024)Siska, Marazopoulou, Ailem, and
  Bono}]{siska-etal-2024-examining}
Charlotte Siska, Katerina Marazopoulou, Melissa Ailem, and James Bono. 2024.
\newblock \href {https://doi.org/10.18653/v1/2024.acl-long.560} {Examining the
  robustness of {LLM} evaluation to the distributional assumptions of
  benchmarks}.
\newblock In \emph{Proceedings of the 62nd Annual Meeting of the Association
  for Computational Linguistics (Volume 1: Long Papers)}, pages 10406--10421,
  Bangkok, Thailand. Association for Computational Linguistics.

\bibitem[{Turpin et~al.(2023)Turpin, Michael, Perez, and
  Bowman}]{turpin-etal-2023-language}
Miles Turpin, Julian Michael, Ethan Perez, and Samuel~R. Bowman. 2023.
\newblock \href {https://arxiv.org/abs/2305.04388} {Language models don't
  always say what they think: Unfaithful explanations in chain-of-thought
  prompting}.
\newblock In \emph{Advances in Neural Information Processing Systems 36
  (NeurIPS 2023)}.

\bibitem[{Tutek et~al.(2025)Tutek, Hashemi~Chaleshtori, Marasovic, and
  Belinkov}]{tutek-etal-2025-measuring}
Martin Tutek, Fateme Hashemi~Chaleshtori, Ana Marasovic, and Yonatan Belinkov.
  2025.
\newblock \href {https://doi.org/10.18653/v1/2025.emnlp-main.504} {Measuring
  chain of thought faithfulness by unlearning reasoning steps}.
\newblock In \emph{Proceedings of the 2025 Conference on Empirical Methods in
  Natural Language Processing}, pages 9935--9960, Suzhou, China. Association
  for Computational Linguistics.

\bibitem[{Wallach et~al.(2025)Wallach, Desai, Cooper, Wang, Atalla, Barocas,
  Blodgett, Chouldechova, Corvi, Dow, Garcia-Gathright, Olteanu, Pangakis,
  Reed, Sheng, Vann, Vaughan, Vogel, Washington, and
  Jacobs}]{wallach-etal-2025-position}
Hanna Wallach, Meera Desai, A.~Feder Cooper, Angelina Wang, Chad Atalla, Solon
  Barocas, Su~Lin Blodgett, Alexandra Chouldechova, Emily Corvi, P.~Alex Dow,
  Jean Garcia-Gathright, Alexandra Olteanu, Nicholas~J. Pangakis, Stefanie
  Reed, Emily Sheng, Dan Vann, Jennifer~Wortman Vaughan, Matthew Vogel, Hannah
  Washington, and Abigail~Z. Jacobs. 2025.
\newblock \href {https://proceedings.mlr.press/v267/wallach25a.html} {Position:
  Evaluating generative {AI} systems is a social science measurement
  challenge}.
\newblock In \emph{Proceedings of the 42nd International Conference on Machine
  Learning}, volume 267 of \emph{Proceedings of Machine Learning Research},
  pages 82232--82251. PMLR.

\bibitem[{Wei et~al.(2022)Wei, Wang, Schuurmans, Bosma, Ichter, Xia, Chi, Le,
  and Zhou}]{wei-etal-2022-cot}
Jason Wei, Xuezhi Wang, Dale Schuurmans, Maarten Bosma, Brian Ichter, Fei Xia,
  Ed~H. Chi, Quoc~V. Le, and Denny Zhou. 2022.
\newblock \href {https://arxiv.org/abs/2201.11903} {Chain-of-thought prompting
  elicits reasoning in large language models}.
\newblock In \emph{Advances in Neural Information Processing Systems 35
  (NeurIPS)}.

\bibitem[{Wu et~al.(2024)Wu, Qiu, Ross, Aky{\"u}rek, Chen, Wang, Kim, Andreas,
  and Kim}]{wu-etal-2024-reasoning}
Zhaofeng Wu, Linlu Qiu, Alexis Ross, Ekin Aky{\"u}rek, Boyuan Chen, Bailin
  Wang, Najoung Kim, Jacob Andreas, and Yoon Kim. 2024.
\newblock \href {https://doi.org/10.18653/v1/2024.naacl-long.102} {Reasoning or
  reciting? exploring the capabilities and limitations of language models
  through counterfactual tasks}.
\newblock In \emph{Proceedings of the 2024 Conference of the North American
  Chapter of the Association for Computational Linguistics: Human Language
  Technologies (Volume 1: Long Papers)}, pages 1819--1862, Mexico City, Mexico.
  Association for Computational Linguistics.

\bibitem[{Yang et~al.(2026)Yang, Levy, Goldberg, and
  Wallace}]{yang-etal-2026-compared}
Zihao Yang, Mosh Levy, Yoav Goldberg, and Byron~C. Wallace. 2026.
\newblock \href {https://arxiv.org/abs/2605.01048} {Compared to what? baselines
  and metrics for counterfactual prompting}.
\newblock \emph{arXiv preprint arXiv:2605.01048}.
\newblock Accepted at COLM 2026.

\bibitem[{Yee et~al.(2024)Yee, Li, Tang, Jung, Paturi, and
  Bergen}]{yee-etal-2024-dissociation}
Evelyn Yee, Alice Li, Chenyu Tang, Yeon~Ho Jung, Ramamohan Paturi, and Leon
  Bergen. 2024.
\newblock \href {https://arxiv.org/abs/2405.15092} {Dissociation of faithful
  and unfaithful reasoning in {LLM}s}.
\newblock \emph{arXiv preprint arXiv:2405.15092}.

\bibitem[{Zaman and Srivastava(2025)}]{zaman-srivastava-2025-causal}
Kerem Zaman and Shashank Srivastava. 2025.
\newblock \href {https://doi.org/10.18653/v1/2025.emnlp-main.1496} {A causal
  lens for evaluating faithfulness metrics}.
\newblock In \emph{Proceedings of the 2025 Conference on Empirical Methods in
  Natural Language Processing}, pages 29425--29449, Suzhou, China. Association
  for Computational Linguistics.

\end{thebibliography}

\appendix

\section{Model--Readout Behavioral Composition}
\label{app:composition}
\begin{table}[h]
\centering
\footnotesize
\setlength{\tabcolsep}{3pt}
\begin{tabular}{lrrrrr r}
\toprule
Contract & Sel. & Nonsel. & Perv. & Inert. & Inval. & Total \\
\midrule
Qwen $\cdot$ scoring & 17 & 2 & 18 & 107 & 0 & 144 \\
Qwen $\cdot$ choice & 9 & 2 & 12 & 83 & 38 & 144 \\
Llama $\cdot$ scoring & 6 & 3 & 7 & 123 & 5 & 144 \\
Llama $\cdot$ choice & 0 & 0 & 0 & 4 & 140 & 144 \\
\midrule
Total & 32 & 7 & 37 & 317 & 183 & 576 \\
\bottomrule
\end{tabular}
\caption{Five-class behavioral composition by model--readout contract. Labels are behavioral diagnoses; inertia does not imply base correctness.}
\label{tab:composition}
\end{table}

\section{Symbols and Policy Definitions}
\label{app:symbols}

Table~\ref{tab:symbols} summarizes the notation used in the main text. A \emph{policy} $h$ is a deterministic function from a world--mapping--readout triple $(w,\pi,r)$ to a raw response. The \emph{semantic response} is the raw response decoded under the contract defined by $\pi$; the \emph{oracle semantic response} is fixed by the independent solver before any model run.

\begin{figure}[t]
\centering
\includegraphics[width=\columnwidth]{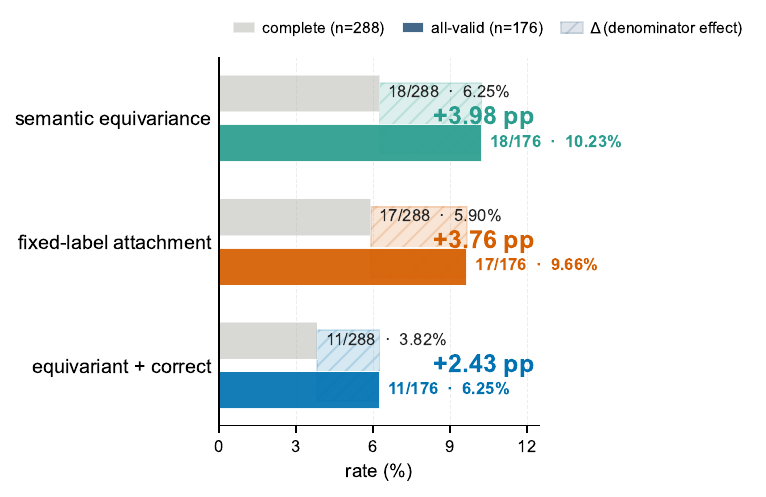}
\caption{Mapping audit: rates before ($n=288$) and after validity filtering ($n=176$). Numerators remain fixed for semantic equivariance, fixed-label attachment, and equivariance plus correctness, so each rate rises by 2.4--4.0 percentage points; denominator choice changes rates without changing counts.}
\label{fig:mapping}
\end{figure}

\begin{figure}[t]
\centering
\includegraphics[width=\columnwidth]{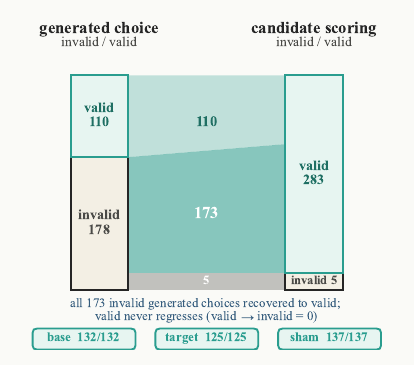}
\caption{Paired readout transition from generated choice to candidate scoring ($n=288$) as proportional flows: 173 invalid generated choices are recovered to valid by scoring and valid never regresses (0 valid$\to$invalid), alongside 110 valid--valid and 5 invalid--invalid pairs. Among both-valid responses, semantics agree completely in base, target, and sham worlds.}
\label{fig:readout}
\end{figure}

\begin{table}[h]
\centering
\footnotesize
\begin{tabularx}{\linewidth}{lY}
\toprule
Symbol & Meaning \\
\midrule
$\mathcal{H}$ & Finite behavioral policy class \\
$O$ & Observation support (set of projected responses) \\
$\tau_{\mathrm{full}}(h)\in\{0,1\}$ & Full-support contract-stable selective-response property \\
$\theta_{\mathrm{local}}$ & Local paired selective-response rate over model--cluster--mapping--readout units \\
$h\equiv_O h'$ & Observational equivalence on support $O$ \\
$w\in\{\text{base},\text{target},\text{sham}\}$ & Intervention world \\
$\pi\in S_3$ & Label permutation (bijection TRUE/FALSE/UNKNOWN $\to$ A/B/C) \\
$r\in\{\text{gc},\text{cs}\}$ & Readout: generated choice or candidate scoring \\
$R_{w,\pi,r}$ & Interventional response tensor cell \\
$O_k$ & Support after adding $k$ observation axes \\
\bottomrule
\end{tabularx}
\caption{Notation summary.}
\label{tab:symbols}
\end{table}

\section{Exclusive Witnesses for Axis Essentiality}
\label{app:witnesses}

For each support component $a$ we inspect policies with different $\tau_{\mathrm{full}}$ that separate under full support but collide under the corresponding leave-one-out support. The witnesses are realized by the seven deterministic policies of Section~\ref{sec:synthetic}:
\begin{itemize}
\item \textbf{Target update:} \texttt{ideal\_semantic\_updater} vs.\ \texttt{target\_inertia}. Both answer base correctly; only the target world separates them.
\item \textbf{Matched sham:} \texttt{ideal\_semantic\_updater} vs.\ \texttt{any\_edit\_reactor}. Both answer base and target in the same direction; only the sham world (where the reactor changes but the updater does not) separates them.
\item \textbf{Paired readout:} \texttt{ideal\_semantic\_updater} vs.\ \texttt{generated\_only\_failure}. The two readouts agree on base and target but disagree on whether the failure is readout-specific.
\item \textbf{Full $S_3$:} \texttt{ideal\_semantic\_updater} vs.\ \texttt{mapping\_conditional\_shortcut}. The shortcut updates only on a subset of label permutations; full $S_3$ exposes the dependence.
\end{itemize}
These witnesses make Proposition~\ref{prop:axis} constructive: omitting any axis reintroduces at least one collision.

\section{Complete Synthetic Recovery Decomposition}
\label{app:synthetic-full}

Table~\ref{tab:full-classes} reports all 16 subsets in the canonical lattice. Let T, S, R, and M denote target, sham, paired readout, and full $S_3$ mapping support. Only TSRM point-identifies $\tau_{\mathrm{full}}$.

\begin{table}[h]
\centering
\footnotesize
\setlength{\tabcolsep}{2.5pt}
\begin{tabular}{lrrr@{\quad}lrrr}
\toprule
Support & $|O|$ & Classes & Coll. & Support & $|O|$ & Classes & Coll. \\
\midrule
$\varnothing$ & 1 & 1 & 6 & TS & 3 & 3 & 2 \\
T & 2 & 2 & 3 & TR & 4 & 4 & 2 \\
S & 2 & 2 & 5 & TM & 12 & 4 & 2 \\
R & 2 & 2 & 5 & SR & 4 & 3 & 4 \\
M & 6 & 2 & 5 & SM & 12 & 3 & 4 \\
RM & 12 & 3 & 4 & TSR & 6 & 5 & 1 \\
TSM & 18 & 5 & 1 & TRM & 24 & 6 & 1 \\
SRM & 24 & 4 & 3 & TSRM & 36 & 7 & \textbf{0} \\
\bottomrule
\end{tabular}
\caption{Complete 16-subset support lattice. ``Coll.'' counts cross-estimand policy pairs. The four three-component rows are the leave-one-out supports.}
\label{tab:full-classes}
\end{table}

\section{Oracle-Transition Asymmetry}
\label{app:oracle-asymmetry}

Table~\ref{tab:oracle-asymmetry} reports the frozen primary estimand stratified by the base-to-target oracle transition. The data are exploratory and post-result. Selective response is more than four times more frequent after TRUE base oracles than after FALSE base oracles, suggesting that models update more readily when a previously entailed proposition becomes undetermined than when a previously refuted proposition does.

\begin{table}[h]
\centering
\footnotesize
\setlength{\tabcolsep}{2pt}
\begin{tabular}{lccc}
\toprule
Transition & Units & Sel.~rate & Sel.~if base-correct \\
\midrule
FALSE$\to$UNKNOWN & 312 & 0.022 & 0.065 \\
TRUE$\to$UNKNOWN & 264 & 0.095 & 0.202 \\
\bottomrule
\end{tabular}
\caption{Oracle-transition asymmetry. Intervals are omitted because this decomposition is exploratory and not part of the confirmatory estimand.}
\label{tab:oracle-asymmetry}
\end{table}

\section{Contract and Source Robustness}
\label{app:gate3}
\label{app:gate3-appendix}

\begin{figure*}[t]
\centering
\includegraphics[width=\textwidth]{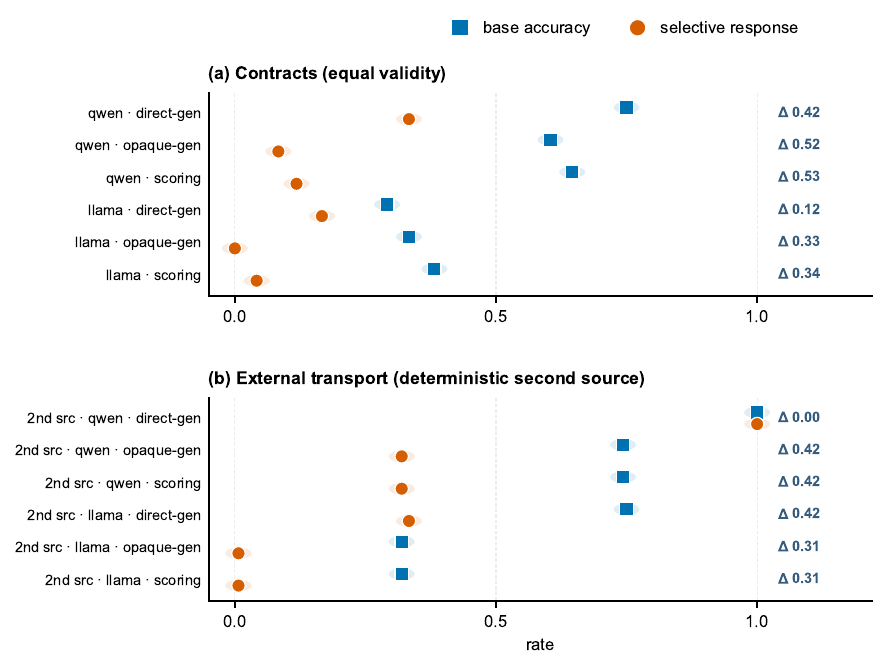}
\caption{Robustness readout contracts (equal validity) and bounded source transport: the accuracy--selective gap persists in every model--readout and source--model--readout stratum, with positive $\Delta$ annotations.}
\label{fig:gate3-appendix}
\end{figure*}

\section{Target--Sham Isolation Audit}
\label{app:isolation}
\begin{figure*}[t]
\centering
\includegraphics[width=\textwidth]{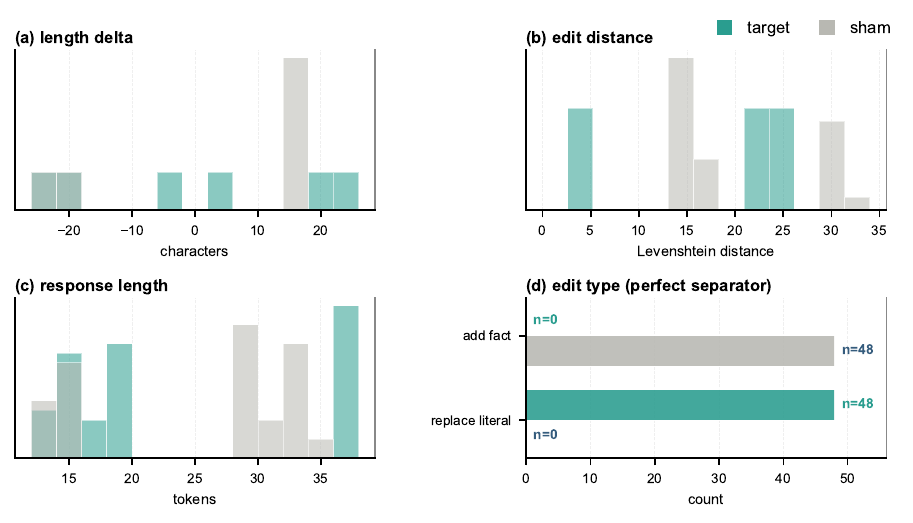}
\caption{Target--sham isolation audit: the formal oracle passes on 144 worlds; the surface audit flags material confounding, and the blinded human review passed with perfect inter-annotator agreement (\textbf{96/96}, Cohen's $\kappa=1.0$). The right panels show the target/sham surface imbalance that motivates the matched negative-control construction.}
\label{fig:isolation}
\end{figure*}

\section{Exclusive Behavioral Diagnoses}
\label{app:archetypes}

Figure~\ref{fig:archetypes} shows the mutually exclusive behavioral diagnoses over the 576 paired units. Reportability failure (183) is the largest category, followed by base-knowledge failure (179) and target inertia (164); target inertia is the largest base-correct semantic non-response category but not the dominant diagnosis overall.

\begin{figure}[t]
\centering
\includegraphics[width=0.82\columnwidth]{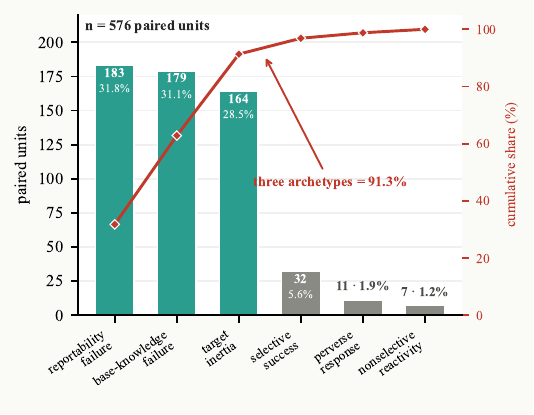}
\caption{Six mutually exclusive diagnoses over 576 paired units (counts in bars, cumulative share in red). Three archetypes---reportability failure, base-knowledge failure, and target inertia---account for 91.3\% of units. These post-result labels do not imply internal causes.}
\label{fig:archetypes}
\end{figure}

\section{Implementation Details}
\label{app:implementation}

\begingroup
\small
\paragraph{Solver grounding.} For each cluster, an independent solver computes the oracle label (TRUE, FALSE, or UNKNOWN) for the query with respect to the open-world signed-Horn theory. The target world is generated by adding a literal that flips an entailed or refuted query to UNKNOWN; the matched sham world applies a surface edit that preserves the oracle. All oracle labels are fixed before any model call.

\paragraph{Readouts.} The \emph{generated-choice} readout presents the query in free-form text and parses the model's output into one of A/B/C using a contract-specific prompt. The \emph{candidate-scoring} readout requests log-probabilities for the three candidates under the same contract and selects the highest. Both readouts are decoded into the semantic TRUE/FALSE/UNKNOWN space via the inverse of the active $S_3$ permutation.

\paragraph{Inference.} Both checkpoints are frozen at the versions reported in Section~\ref{sec:real}. Generated choice uses greedy decoding (\texttt{do\_sample=false}, \texttt{max\_new\_tokens=8}); candidate scoring selects the candidate with the largest conditional sequence log-probability sum. The real-model pilot required no hyperparameter search or fine-tuning. The synthetic policy-recovery study requires no model inference and enumerates the frozen policies exhaustively.

\paragraph{Attention-mask reproducibility check.} After correcting the Llama generated-choice attention-mask handling, a completed rerun produced 864 responses (432 regenerated choice responses plus 432 reused scoring responses) exactly equal to the prior recorded outputs, including all raw and parsed responses. This rules out the missing-mask warning as an implementation-level alternative explanation for the reported results.

\paragraph{Bootstrap.} Inference is performed at the cluster level via whole-cluster bootstrap (10,000 replicates). Cells within a cluster are repeated measurements under different contracts, not independent replicates, so the cluster is the resampling unit.
\endgroup

\section{Algorithms for Equivalence-Class and Collision Analysis}
\label{app:algorithms}
The synthetic recovery study is a pure enumeration over the frozen policy class. Algorithm~\ref{alg:classes} constructs observation profiles, partitions policies into observation-equivalence classes, and records every pair of policies that share a profile while differing on the target estimand. Algorithm~\ref{alg:axis} then checks each support component for essentiality by removing that component from the full support and testing whether any collision remains.

\begin{algorithm}[H]
\small
\caption{Equivalence classes and collisions}
\label{alg:classes}
\begin{algorithmic}[1]
\Require Policy class $\mathcal H$, target estimand $\tau:\mathcal H\to\{0,1\}$, observation support $O$
\Ensure Equivalence classes $\mathcal C$ and collision set $\mathsf{Coll}$
\For{$h \in \mathcal H$}
  \State $p[h] \gets \{(o, h(o)) : o \in O\}$ \Comment{observation profile}
\EndFor
\State $\mathcal C \gets$ partition of $\mathcal H$ by equality of $p[h]$
\State $\mathsf{Coll} \gets \{(h,h') \in \mathcal H^2 : \tau(h)\neq\tau(h') \land p[h]=p[h']\}$
\State \Return $(\mathcal C, \mathsf{Coll})$
\end{algorithmic}
\end{algorithm}

\begin{algorithm}[H]
\small
\caption{Check axis essentiality}
\label{alg:axis}
\begin{algorithmic}[1]
\Require $\mathcal H$, $\tau$, full support $O_4$, components $A=\{T,S,R,M\}$
\Require $T$: target; $S$: sham; $R$: paired readout; $M$: full-$S_3$ mapping
\Ensure Set $E$ of essential support components
\State $O_{-T}\gets O_{SRM}$; $O_{-S}\gets O_{TRM}$; $O_{-R}\gets O_{TSM}$; $O_{-M}\gets O_{TSR}$
\State $E \gets \emptyset$
\For{each support component $a \in A$}
  \State remove support component $a$ by selecting $O_{-a}$
  \State $(\mathcal C, \mathsf{Coll}) \gets \textsc{ComputeClasses}(\mathcal H,\tau,O_{-a})$
  \If{$\mathsf{Coll}\neq\emptyset$}
    \State mark $a$ essential; $E \gets E\cup\{a\}$
  \EndIf
\EndFor
\State \Return $E$
\end{algorithmic}
\end{algorithm}

Both algorithms are deterministic and run without any model call; their cost is $O(|\mathcal H|^2 \cdot |O_4|)$ for the full enumeration reported in Table~\ref{tab:synthetic}.

\section{Frozen Policy Definitions}
\label{app:policies}

Table~\ref{tab:policies} summarizes seven deterministic policies and the separately analyzed stochastic mixture. The deterministic policy class is used for all component-essentiality checks.

\begin{table}[H]
\centering
\footnotesize
\begin{tabularx}{\linewidth}{>{\raggedright\arraybackslash}p{2.6cm}cY}
\toprule
Policy & $\tau$ & Behavior on full $O_4$ \\
\midrule
\path{ideal_semantic_updater} & 1 & Correct on base; follows the target oracle; keeps sham identical to base; behaves consistently across both readouts and all six $S_3$ mappings. \\
\path{target_inertia} & 0 & Correct on base; repeats the base answer on target and sham regardless of oracle change. \\
\path{any_edit_reactor} & 0 & Changes answer whenever the input is edited, both on target and on matched sham. \\
\path{fixed_label_responder} & 0 & Always returns the same surface label; separable only once all six $S_3$ mappings are observed. \\
\path{mapping_conditional_shortcut} & 0 & Updates only on a subset of $S_3$ mappings; appears semantic on the observed mapping. \\
\path{generated_only_failure} & 0 & Fails semantically on generated choice while answering correctly under candidate scoring. \\
\path{scoring_only_success} & 0 & Answers correctly only under candidate scoring; the symmetric counterpart to the previous policy. \\
\path{stochastic_mixture} & 0--1 & Mixes ideal updater behavior with target inertia at a declared rate; recovered only at its mixture granularity. \\
\bottomrule
\end{tabularx}
\caption{Frozen behavioral policies in the synthetic recovery study. Policies are deterministic except for the stochastic mixture, which is a weighted combination of two deterministic policies.}
\label{tab:policies}
\end{table}

\newpage
\section{Interventional Response Tensor Schema}
\label{app:tensor}

The full observation support $O_4$ is the projection of the interventional response tensor $R_{w,\pi,r}$ onto all three indices. Table~\ref{tab:tensor-schema} lists the index ranges and their semantics; the product space contains $3\times6\times2=36$ cells per (model, cluster).

\section{Data Provenance}
\label{app:sources}

All reported numbers and figures are generated from frozen analysis outputs.

\begin{figure*}[t]
\section{Minimum-Support Synthesis Illustration}
\label{app:ostar}
\centering
\includegraphics[width=0.68\textwidth]{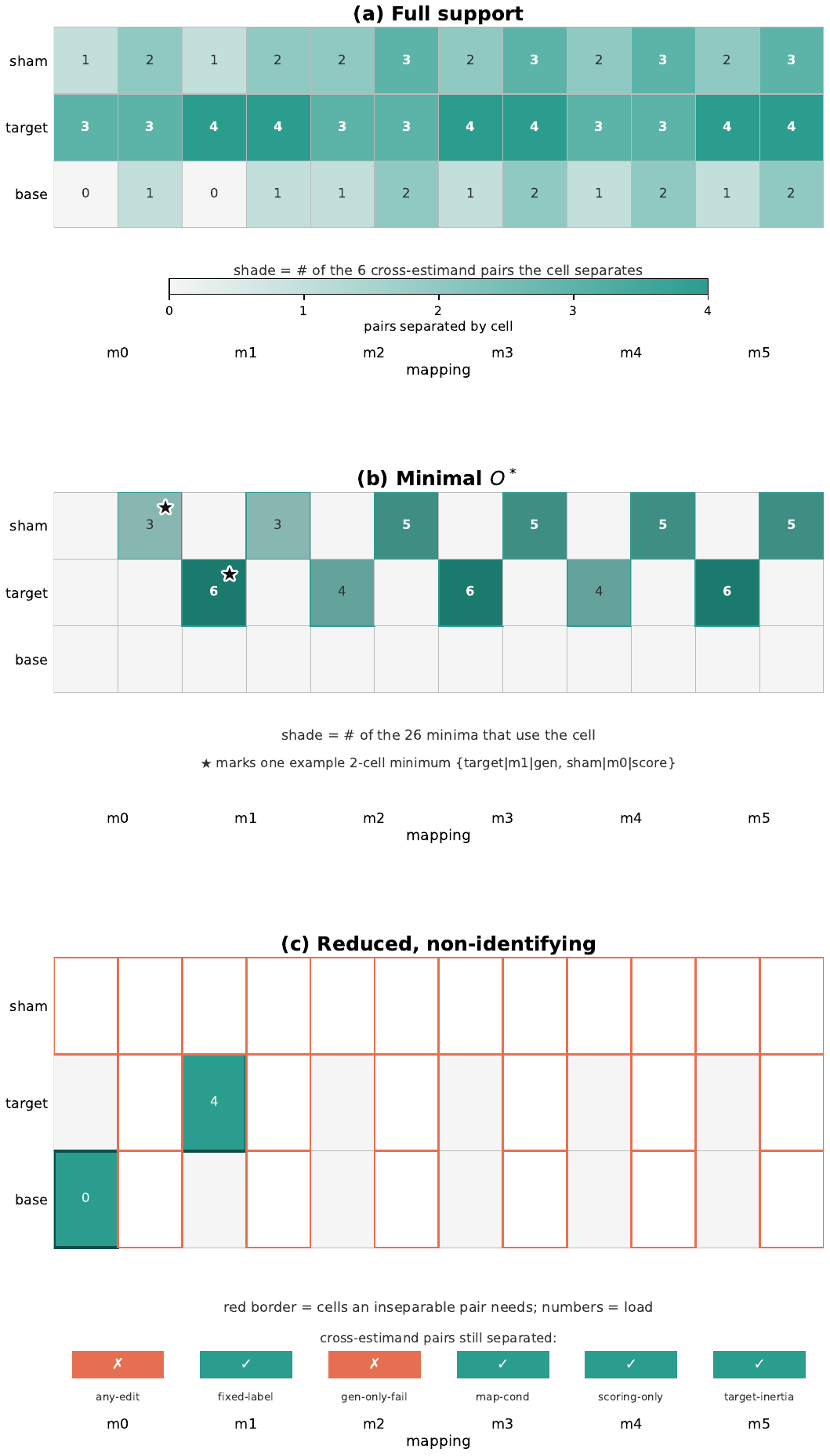}
\caption{Protocol synthesis on the 36-cell tensor. (a) Full support: each cell is shaded by its \emph{distinguishing load}---how many of the 6 cross-estimand pairs it separates (0--4). Target-world cells carry the most load; base-world generated-choice cells separate no pair. (b) Minimal identifying support $O^*$: all 26 two-cell minima are $\{$one \emph{target} generated-choice cell, one \emph{sham} candidate-scoring cell$\}$. Shade = how many of the 26 minima use the cell; no cell is mandatory and 25 of the 36 cells appear in no minimum. The starred pair is one example minimum. (c) A reduced 2-cell support $\{$target|m1|gen, base|m0|gen$\}$ is not identifying: the base|m0|gen cell separates no pair, so 2 of the 6 cross-estimand pairs (any-edit, gen-only-fail) remain inseparable (red); the bottom strip shows per-pair separation.}
\label{fig:ostar}
\end{figure*}

\begin{table}[H]
\centering
\footnotesize
\setlength{\tabcolsep}{2pt}
\begin{tabularx}{\linewidth}{llY}
\toprule
Index & Values & Meaning \\
\midrule
World $w$ & base, target, sham & Intervention applied to the query. Target flips oracle to UNKNOWN; sham preserves it. \\
Mapping $\pi$ & 6 permutations of $S_3$ & Bijection from semantic labels TRUE, FALSE, UNKNOWN to surface labels A, B, C. \\
Readout $r$ & gc, cs & \texttt{gc}: generated choice (free-form output parsed to A/B/C); \texttt{cs}: candidate scoring (log-probability over candidates). \\
\bottomrule
\end{tabularx}
\caption{Tensor indices for the interventional response tensor $R_{w,\pi,r}$.}
\label{tab:tensor-schema}
\end{table}

\end{document}